\tikzset{cross/.style={cross out, draw, 
         minimum size=2*(#1-\pgflinewidth), 
         inner sep=0pt, outer sep=0pt}}
\newcommand{\state}[0]{x}
\newcommand{\prel}[0]{p_{\rm{rel}}}
\newcommand{\vrel}[0]{v_{\rm{rel}}}
\newcommand{\preldot}[0]{\dot{p}_{\rm{rel}}}
\newcommand{\vreldot}[0]{\dot{v}_{\rm{rel}}}
\newcommand{\norm}[1]{\left\lVert#1\right\rVert}
\newcommand{\vertiii}[1]{{\left\vert\kern-0.25ex\left\vert\kern-0.25ex\left\vert #1 
    \right\vert\kern-0.25ex\right\vert\kern-0.25ex\right\vert}}
\newtheorem{theorem}{Theorem}
\newtheorem{definition}{Definition}
\renewcommand{\fps@figure}{htp}
\renewcommand{\fps@table}{htp}
\def\BibTeX{{\rm B\kern-.05em{\sc i\kern-.025em b}\kern-.08em
    T\kern-.1667em\lower.7ex\hbox{E}\kern-.125emX}}
\begin{document}

\title{Safe Legged Locomotion using Collision Cone Control Barrier Functions}

\author{Manan Tayal, Shishir Kolathaya
\thanks{This research was supported by the Prime Minister's Research Fellowship (PMRF).
}
\thanks{Both the authors are from Cyber-Physical Systems, Indian Institute of Science (IISc), Bengaluru.
{\tt\scriptsize \{manantayal, shishirk\}@iisc.ac.in}
.
}%
}

\maketitle
\begin{abstract}
Legged robots exhibit significant potential across diverse applications, including but not limited to hazardous environment search and rescue missions and the exploration of unexplored regions both on Earth and in outer space. However, the successful navigation of these robots in dynamic environments heavily hinges on the implementation of efficient collision avoidance techniques. In this research paper, we employ Collision Cone Control Barrier Functions (C3BF) to ensure the secure movement of legged robots within environments featuring a wide array of static and dynamic obstacles. We introduce the Quadratic Program (QP) formulation of C3BF, referred to as C3BF-QP, which serves as a protective filter layer atop a reference controller to ensure the robots' safety during operation. The effectiveness of this approach is illustrated through simulations conducted on PyBullet.
\end{abstract}


\section{Introduction}
\label{section: Introduction}

\par In recent years, the field of robotics has seen a significant rise in the development and utilization of legged robots. These machines, designed to mimic the movements of animals and humans, have shown great promise in various applications, ranging from search and rescue missions in hazardous environments to exploration of uncharted territories on Earth and beyond. The need for legged robots has become increasingly apparent as they offer distinct advantages over their wheeled or tracked counterparts in terms of adaptability and maneuverability. However, as the complexity and versatility of these robots grow, so does the demand for real-time safe navigation in dynamic and unpredictable environments.


However, the increased complexity of legged robots and their deployment in dynamic, unpredictable environments necessitates the development of real-time safe navigation systems. Ensuring the safety of both the robots and their operators is paramount, as these machines can encounter unforeseen obstacles, difficult terrains, and changing conditions. In this context, real-time safe navigation becomes a critical research area, aiming to enable legged robots to make dynamic decisions and adapt to their surroundings while avoiding collisions, maintaining stability, and achieving their objectives. 

The Collision Cone Control Barrier Functions Based Quadratic Program (C3BF-QP) \cite{C3BF_icc,tayal2023control,tayal2024collision}represents a fusion of Control Barrier Functions \cite{7040372}\cite{Ames_2017} and Collision Cones \cite{Fiorini1993}, \cite{ doi:10.1177/027836499801700706}, \cite{709600}. It operates dynamically in real-time, serving as an additional layer of protection above the reference controller, ensuring the safety of robots. In this research paper, we demonstrate the practical implementation of C3BF-QP on both Quadruped and Bipedal robots, assessing its efficacy in diverse environmental contexts.

\begin{figure}[t]
    \centering
        
        
    \includegraphics[width=0.49\linewidth]{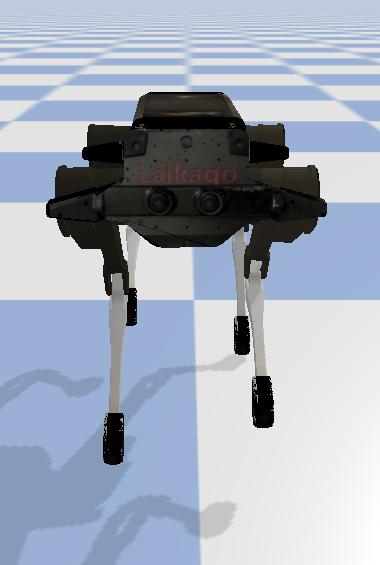}
    \includegraphics[width=0.49\linewidth]{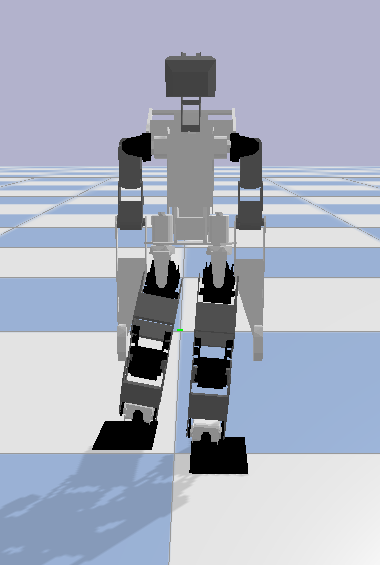}
\caption{Quadruped (left) and Bipedal (right) Legged Robots}
\label{fig:models}
\end{figure}



\subsection{Organisation}
The rest of this paper is organized as follows. Preliminaries explaining the concept of control barrier functions (CBFs), collision cone CBFs and controller design are introduced in section \ref{section: Background}. The application of the above CBFs on the Quadrupeds and Bipeds to avoid obstacles of various shapes is discussed in section \ref{section: Safety Guarantee}. The Simulation setup and results will be discussed in section \ref{section: Simulation Results}. Finally, we present our conclusion in section \ref{section: Conclusions}.

\section{Preliminaries}
\label{section: Background}
In this section, first, we will formally introduce Control Barrier Functions (CBFs) and their importance for real-time safety-critical control. Finally, we will introduce  Collision Cone CBF approach.

\subsection{Control barrier functions (CBFs)}
Having described the vehicle models, we now formally introduce Control Barrier Functions (CBFs) and their applications in the context of safety. 
%
We have the nonlinear control system in affine form:
\begin{equation}
	\dot{\state} = f(\state) + g(\state)u
	\label{eqn: affine control system}
\end{equation}
where $\state \in \mathcal{D} \subseteq \mathbb{R}^n$ is the state of system, and $u \in \mathbb{U} \subseteq \mathbb{R}^m$ the input for the system. Assume that the functions $f: \mathbb{R}^n \rightarrow \mathbb{R}^n$ and $g: \mathbb{R}^n \rightarrow \mathbb{R}^{n \times m}$ are continuously differentiable. Given a Lipschitz continuous control law $u = k(\state)$, the resulting closed loop system $\dot{\state} = f_{cl}(\state) = f(\state) + g(\state)k(\state)$ yields a solution $\state(t)$, with initial condition $\state(0) = \state_0$.
%
Consider a set $\mathcal{C}$ defined as the \textit{super-level set} of a continuously differentiable function $h:\mathcal{D}\subseteq \mathbb{R}^n \rightarrow \mathbb{R}$ yielding,
\begin{align}
\label{eq:setc1}
	\mathcal{C}                        & = \{ \state \in \mathcal{D} \subset \mathbb{R}^n : h(\state) \geq 0\} \\
\label{eq:setc2}
	\partial\mathcal{C}                & = \{ \state \in \mathcal{D} \subset \mathbb{R}^n : h(\state) = 0\}\\
\label{eq:setc3}
	\text{Int}\left(\mathcal{C}\right) & = \{ \state \in \mathcal{D} \subset \mathbb{R}^n : h(\state) > 0\}
\end{align}
It is assumed that $\text{Int}\left(\mathcal{C}\right)$ is non-empty and $\mathcal{C}$ has no isolated points, i.e. $\text{Int}\left(\mathcal{C}\right) \neq \phi$ and $\overline{\text{Int}\left(\mathcal{C}\right)} = \mathcal{C}$. 
The system is safe w.r.t. the control law $u = k(\state)$ if
	$\forall \: \state(0) \in \mathcal{C} \implies \state(t) \in \mathcal{C} \;\;\; \forall t \geq 0$.
We can mathematically verify if the controller $k(\state)$ is safeguarding or not by using Control Barrier Functions (CBFs), which is defined next.

\begin{definition}[Control barrier function (CBF)]{\it
\label{definition: CBF definition}
Given the set $\mathcal{C}$ defined by \eqref{eq:setc1}-\eqref{eq:setc3}, with $\frac{\partial h}{\partial \state}(\state) \neq 0\; \forall \state \in \partial \mathcal{C}$, the function $h$ is called the control barrier function (CBF) defined on the set $\mathcal{D}$, if there exists an extended \textit{class} $\mathcal{K}$ function $\kappa$ such that for all $\state \in \mathcal{D}$:

\begin{equation}
\begin{aligned}
    \underbrace{\text{sup}}_{ u \in \mathbb{U}}\! \left[\underbrace{\mathcal{L}_{f} h(\state) + \mathcal{L}_g h(\state)u} \iffalse+ \frac{\partial h}{\partial t}\fi_{\dot{h}\left(\state, u\right)} \! + \kappa\left(h(\state)\right)\right] \! \geq \! 0
\end{aligned}
\end{equation}
where $\mathcal{L}_{f} h(\state) = \frac{\partial h}{\partial \state}f(\state)$ and $\mathcal{L}_{g} h(\state)= \frac{\partial h}{\partial \state}g(\state)$ are the Lie derivatives.
}
\end{definition}

Given this definition of a CBF, we know from \cite{Ames_2017} and \cite{8796030} that any Lipschitz continuous control law $k(\state)$ satisfying the inequality: $\dot{h} + \kappa( h )\geq 0$ ensures safety of $\mathcal{C}$ if $x(0)\in \mathcal{C}$, and asymptotic convergence to $\mathcal{C}$ if $x(0)$ is outside of $\mathcal{C}$. 

\subsection{Safety Filter Design}
\label{subsection: safe_controller}
Having describe the CBF, we can now describe the Quadratic Programming (QP) formulation of CBFs. CBFs act as \textit{safety filters} which take the desired input $u_{des}(\state,t)$ and modify this input in a minimal way: 

\begin{equation}
\begin{aligned}
\label{eqn: CBF QP}
u^{*}(x,t) &= \min_{u \in \mathbb{U} \subseteq \mathbb{R}^m} \norm{u - u_{des}(x,t)}^2\\
\quad & \textrm{s.t. } \mathcal{L}_f h(x) + \mathcal{L}_g h(x)u + \kappa \left(h(x)\right) \geq 0\\
\end{aligned}
\end{equation}
This is called the Control Barrier Function based Quadratic Program (CBF-QP). The CBF-QP control $u^{*}$ can be obtained by solving the above optimization problem using KKT conditions and is given by
\begin{multline}\label{eq:CBF-QP}
u_{safe}(x, t) \!=\!
	\begin{cases}
		0 & \text{for } \psi(x, t) \geq 0 \\
		-\frac{\mathcal{L}_{g}h(x)^T \psi(x, t)}{\mathcal{L}_{g}h(x)\mathcal{L}_{g}h(x)^T} & \text{for } \psi(x, t) < 0
	\end{cases}
\end{multline}

where $\psi (x,t) := \dot{h}\left(x, u_{ref}(x, t)\right) + \kappa \left(h(x)\right)$. The sign change of $\psi$ yields a switching type of a control law.

\subsection{Collision Cone CBF (C3BF) candidate}
\label{subsection: C3BF}
The concept of a collision cone is a critical element when assessing the potential for collision between two objects. Specifically, it defines a set that enables the prediction of collision likelihood between these objects based on the direction of their relative velocity. In essence, the collision cone for a pair of objects delineates the directions in which, if either object were to move, a collision between them would occur. Throughout the remainder of this paper, we will consider obstacles as ellipses and simplify the ego-vehicle to a single point. As such, the term collision cone will pertain to this scenario, with the center of the ego-vehicle serving as the reference point.

Let's consider a scenario involving an ego-vehicle, described by system (\ref{eqn: affine control system}), and a dynamic obstacle such as a pedestrian or another vehicle. This configuration is visually depicted in Figure 2. To assess potential collisions, we approximate the obstacle as an ellipse and draw two tangents from the center of the ego-vehicle to a conservative circle that encompasses the ellipse, accounting for the dimensions of the ego-vehicle (where $r = max(c_1, c_2) + \frac{Width_{vehicle}}{2}$). For a collision to become a possibility, the relative velocity of the obstacle must be directed toward the ego-vehicle. This implies that the relative velocity vector should not point into the pink shaded region denoted as EHI in Figure \ref{Fig:Construction of Collision Cone}, which takes the shape of a cone. This cone represents a set of unsafe directions for the relative velocity vector, denoted as $\mathcal{C}$. When a function $h:\mathcal{D}\subseteq \mathbb{R}^n \rightarrow \mathbb{R}$ satisfying Definition: \ref{definition: CBF definition} defined on $\mathcal{C}$, it ensures that a Lipschitz continuous control law derived from the resulting QP (\ref{eqn: CBF QP}) for the system guarantees collision avoidance, even if the reference $u_{ref}$ attempts to guide the objects towards a collision course. This innovative approach, avoiding the pink cone region, introduces the concept of Collision Cone Control Barrier Functions (C3BFs).

\begin{figure}
    \centering
    \begin{tikzpicture}[
      collisioncone/.style={shape=rectangle, fill=red, line width=2, opacity=0.30},
      obstacleellipse/.style={shape=rectangle, fill=blue, line width=2, opacity=0.35},
    ]
        
        \def\r{1.32003}; 
        \def\q{-3.5}; 
        \def\x{{\r^2/\q}}; 
        \def\y{{\r*sqrt(1-(\r/\q)^2}}; 
        \def\z{{\q - abs(\q - (\r^2/\q))}};
        \coordinate (Q) at (\q,0); 
        \coordinate (P) at (\x,\y); 
        \coordinate (O) at (0.0, 0); 
        \coordinate (E) at (\q, 0); 
        \coordinate (K) at (\x, {-\y}); 
        \coordinate (H) at (\z, \y);
        \coordinate (I) at (\z, {-\y});
        
        \draw[name path = aux, red!60, very thick, dashed] (O) circle (1.32003);
        \draw[blue!50, thick, fill=blue!20] (O) ellipse (1.20 and 0.55);
        \draw[black, thick] (E) -- (O) node [midway, below] {$\|\prel\|$};
        
        
        \draw[black, thick, name path = tangent] ($(Q)!-0.0!(P)$) -- ($(Q)!1.3!(P)$); 
        \draw[black, thick, name path = normal] ($(O)!-0.0!(P)$) -- ($(O)!1.4!(P)$);
        \draw[black, thick] ($(Q)!-0.0!(K)$) -- ($(Q)!1.3!(K)$);
        
        \draw[black, thick, name path = tangent, dashed] ($(Q)!-0.0!(H)$) -- ($(Q)!1.1!(H)$);
        \draw[black, thick, dashed] ($(Q)!-0.0!(I)$) -- ($(Q)!1.1!(I)$);
        
        \tkzMarkRightAngle[draw=black,size=.2](O,P,Q);
        \tkzMarkAngle[draw=black, size=0.75](O,Q,P);
        \tkzLabelAngle[dist=1.0](O,Q,P){$\phi$};
        
        \path[shade, left color=red, right color = red, opacity=0.2] (E) -- (H) -- (I) -- cycle;
        
        \fill [black] (E) circle (2pt) node[anchor=north, black] (n1) {$(x,y)$} node[anchor=south, black] (n1) {E};
        \fill [blue] (O) circle (2pt) node[anchor=north, blue] (n1) {$(c_x, c_y)$} node[anchor=south east, blue] (n1) {O}; 
        \fill [black] (P) circle (2pt) node[anchor=south, black] (n1) {$P$};
        \fill [black] (K) circle (2pt) node[anchor=north, black] (n1) {$K$};
        \fill [black] (H) circle (2pt) node[anchor=south, black] (n2) {$H$};
        \fill [black] (I) circle (2pt) node[anchor=north, black] (n1) {$I$};
        
        \draw [<->, color=black, thick, dashed] ([xshift=5 pt, yshift=0 pt]O) -- ([xshift=5 pt, yshift=0 pt]P) node [midway, right] {$r = a+\frac{w}{2}$};
        \draw [<->, color=black, thick, dashed] (O) -- (1.20, 0) node [midway, above] {$a$};
        
        \matrix [above right,nodes in empty cells, matrix of nodes, column sep=0.5cm, inner sep=6pt] at (current bounding box.north west) {
          \node [collisioncone,label=right:{\footnotesize Collision Cone}] {}; &
          \node [obstacleellipse,label=right:{\footnotesize Obstacle Ellipse}] {}; \\
        };
    \end{tikzpicture}
    \caption{Construction of collision cone for an elliptical obstacle considering the ego-vehicle's dimensions (width: $w$).} 
    \label{Fig:Construction of Collision Cone}
\end{figure}


\section{Collision Cone CBFs on Legged Robots}
\label{section: Safety Guarantee}
Having described Collision Cone CBF candidate, we will see their application on legged robots in this section.

In this paper, we will focus on two types of legged robots: Quadruped Robots, characterized by their four-legged configuration, and Biped Robots, which possess a two-legged structure. These legged robots are equipped with baseline controllers such as the Convex MPC controller \cite{8594448} for Quadruped robots and the ZMP Walking Using Preview Controller \cite{4209488} \cite{1241826} \cite{maximo2015omnidirectional} for Biped Robots. However, more recent learning based approaches \cite{tayal2022realising,10569443, mothish2024birodiff} can also be used as baseline controllers.
These controllers enable the robots to accurately track velocities or accelerations in the x, y, z and yaw directions. 


Legged robots must navigate while avoiding collisions with both vertical and horizontal obstacles, requiring the utilization of two independent Control Barrier Functions (CBFs) corresponding to each type of obstacle. Furthermore, it is possible to simplify the consideration of all other obstacles by decomposing them into their vertical and horizontal components. In the subsequent subsections, we will elaborate on the derivation of strategies for interacting with these obstacles.

\subsection{Collision Avoidance - Vertical Obstacles}
\label{section: vertical cccbf}
When addressing collision avoidance concerning vertical obstacles, specifically movement within the x-y plane, these robots can be modeled as unicycle model (acceleration control), as demonstrated in \cite{C3BF_icc}. The unicycle model is shown as follows:

\begin{equation}
	\begin{bmatrix}
		\dot{x}_p \\
		\dot{y}_p \\
		\dot{\theta} \\
		\dot{v} \\
		\dot{\omega}
	\end{bmatrix}
	=
        \begin{bmatrix}
            v\cos\theta\\
            v\sin\theta\\
            \omega \\
            0 \\
            0
        \end{bmatrix}
	+
	\begin{bmatrix}
            0 & 0 \\
            0 & 0 \\
            0 & 0 \\
            1 & 0 \\
            0 & 1
	\end{bmatrix}
	\begin{bmatrix}
		a \\
		\alpha
	\end{bmatrix}
    \label{eqn:Acceleration controlled Unicycle model}
\end{equation}

We first obtain the relative position vector between the body center of the robot and the center of the obstacle. 
Therefore, we have
\begin{align}\label{eq:pos_ver}
    \prel := \begin{bmatrix}
        c_x - (x_p + l \cos(\theta)) \\
        c_y - (y_p + l \sin(\theta))
    \end{bmatrix}
\end{align}
Here $l$ is the distance of the body center from the differential drive axis and $\theta$ is the yaw angle. We obtain its velocity as
\begin{align}\label{eq:vel_ver}
    \vrel := \begin{bmatrix}
        \dot c_x - (v \cos (\theta) - l \sin(\theta)*\omega) \\
        \dot c_y - (v \sin (\theta) + l \cos(\theta)*\omega)
    \end{bmatrix}.
\end{align}

From \cite{C3BF_icc}, we have the following CBF candidate:

\begin{equation}
    h_{ver}(\state, t) = <\prel,\vrel>
    + \|\prel\|\|\vrel\|\cos\phi
    \label{eqn:CC-CBF-ver}
\end{equation}

where, $\phi$ is the half angle of the cone, the expression of $\cos\phi$ is given by $\frac{\sqrt{\|\prel\|^2 - r^2}}{\|\prel\|}$ (see Fig. \ref{Fig:Construction of Collision Cone}). 

The constraint simply ensures that the angle between $\prel, \vrel$ is less than $180^\circ - \phi$.  

Having introduced Collision Cone CBF candidates in \ref{subsection: C3BF}, the next step is to formally verify that they are, indeed, valid CBFs. 
We have the following result.

\begin{theorem}\label{thm:CC-CBF}{\it
Given the above model, the proposed CBF candidate \eqref{eqn:CC-CBF-ver} with $\prel,\vrel$ defined by \eqref{eq:pos_ver}, \eqref{eq:vel_ver} is a valid CBF defined for the set $\mathcal{D}$.}
\end{theorem}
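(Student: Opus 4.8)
The plan is to work on the domain $\mathcal{D} = \{\state : \norm{\prel} > r\}$, on which $\cos\phi = \sqrt{\norm{\prel}^2 - r^2}/\norm{\prel}$ is real so that $h_{ver}$ in \eqref{eqn:CC-CBF-ver} reads $\langle \prel, \vrel\rangle + \norm{\vrel}\sqrt{\norm{\prel}^2 - r^2}$, and is $C^1$ wherever $\vrel \neq 0$. Two structural facts drive the argument. First, by construction $\preldot = \vrel$: this is a one-line check of \eqref{eq:pos_ver}, \eqref{eq:vel_ver} against the model \eqref{eqn:Acceleration controlled Unicycle model}. Second, differentiating \eqref{eq:vel_ver} once more along \eqref{eqn:Acceleration controlled Unicycle model} gives $\vreldot = w(\state) + B(\theta)\,u$, where $u = [a,\alpha]^T$ and $B(\theta) = \left[\begin{smallmatrix} -\cos\theta & l\sin\theta \\ -\sin\theta & -l\cos\theta \end{smallmatrix}\right]$, so $\det B(\theta) = l \neq 0$ and the input-to-$\vreldot$ map is invertible.

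Next I would differentiate $h_{ver}$ along \eqref{eqn:Acceleration controlled Unicycle model}. Using $\preldot = \vrel$, the input enters only through $\langle \prel, \vreldot\rangle$ and $\langle \vrel, \vreldot\rangle$, and collecting the $u$-dependent terms yields $\mathcal{L}_g h_{ver} = B(\theta)^T \xi$ with $\xi := \prel + \tfrac{\sqrt{\norm{\prel}^2 - r^2}}{\norm{\vrel}}\,\vrel$ (this same $\xi$ is $\partial h_{ver}/\partial \vrel$), the control-free remainder being $\mathcal{L}_f h_{ver}$. The crux — and the only step that is not bookkeeping — is that $\mathcal{L}_g h_{ver}$ cannot vanish on $\mathcal{D}$ whenever $\vrel \neq 0$: since $B^T$ is invertible, $\mathcal{L}_g h_{ver} = 0$ would force $\xi = 0$, i.e. $\prel = -\tfrac{\sqrt{\norm{\prel}^2 - r^2}}{\norm{\vrel}}\vrel$, and taking norms gives $\norm{\prel} = \sqrt{\norm{\prel}^2 - r^2}$, i.e. $r = 0$, which is excluded for a genuine obstacle. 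The same computation shows $\partial h_{ver}/\partial \state \neq 0$ on $\partial\mathcal{C}$. Since $\mathbb{U} = \mathbb{R}^2$ and $\mathcal{L}_g h_{ver} \neq 0$, for every extended class-$\mathcal{K}$ function $\kappa$ the supremum in Definition \ref{definition: CBF definition} equals $+\infty$ (take $u$ along $\mathcal{L}_g h_{ver}^T$ and scale), so the CBF inequality holds at all such states.

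The point needing the most care is the degenerate case $\vrel = 0$, where $h_{ver} = 0$ and the $\norm{\vrel}$ term is not classically differentiable. I would handle it either by excising this set from $\mathcal{D}$, or through the one-sided (Dini) derivative: at $\vrel = 0$ one has $\dot h_{ver}^+ = \langle \prel, \vreldot\rangle + \norm{\vreldot}\sqrt{\norm{\prel}^2 - r^2}$, and choosing $u$ so that $\vreldot = \mu\,\prel$ with $\mu > 0$ (possible because $B$ is invertible) makes $\dot h_{ver}^+ > 0$, hence $\dot h_{ver}^+ + \kappa(h_{ver}) > 0$. Collecting the three cases shows that $h_{ver}$ satisfies Definition \ref{definition: CBF definition} on $\mathcal{D}$.
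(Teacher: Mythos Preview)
Your proof is correct and follows essentially the same approach as the paper (which defers to \cite[Theorem~1]{C3BF_icc}, but whose parallel argument in Theorem~\ref{thm:CC-CBF-hor} matches yours step for step): compute $\mathcal{L}_g h$ as $\xi^T B(\theta)$ with $\xi = \prel + \tfrac{\sqrt{\norm{\prel}^2 - r^2}}{\norm{\vrel}}\,\vrel$, and rule out $\xi = 0$ via the norm contradiction $\norm{\prel} = \sqrt{\norm{\prel}^2 - r^2}$; your invertibility of $B(\theta)$ (with $\det B = l$) replaces the paper's coordinate-axis perpendicularity check from the double-integrator case. Your explicit treatment of the $\vrel = 0$ degeneracy via a one-sided derivative goes slightly beyond what the paper addresses.
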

\begin{proof}
Similar to the Proof in \cite[Theorem 1]{C3BF_icc}.
\end{proof}

\subsection{Collision Avoidance - Horizontal Obstacles}
\label{section: horizontal cccbf}

When addressing collision avoidance concerning horizontal obstacles, specifically movement within the x-z plane, these robots can be modeled as follows (assuming yaw to be near zero):

\begin{equation}
	\begin{bmatrix}
		\dot{x}_p \\
		\dot{z}_p \\
		\ddot{x}_p \\
		\ddot{z}_p
	\end{bmatrix}
	=
        \begin{bmatrix}
            \dot{x}_p \\
		\dot{z}_p \\
            0 \\
            0
        \end{bmatrix}
	+
	\begin{bmatrix}
            0 & 0 \\
            0 & 0 \\
            1 & 0 \\
            0 & 1
	\end{bmatrix}
	\begin{bmatrix}
		a \\
		{a}_z
	\end{bmatrix}
    \label{eqn:Acceleration controlled model}
\end{equation}

We first obtain the relative position vector between the body center of the robot and the center of the obstacle. 
Therefore, we have
\begin{align}\label{eq:pos_hor}
    \prel := \begin{bmatrix}
        c_x - x_p \\
        c_z - z_p
    \end{bmatrix}
\end{align}
Here $l$ is the distance of the body center from the differential drive axis and $\theta$ is the yaw angle. We obtain its velocity as
\begin{align}\label{eq:vel_hor}
    \vrel := \begin{bmatrix}
        \dot c_x - v \\
        \dot c_z - v_{z}
    \end{bmatrix}.
\end{align}

From \cite{C3BF_icc}, we have the following CBF candidate:

\begin{equation}
    h_{hor}(\state, t) = <\prel,\vrel>
    + \|\prel\|\|\vrel\|\cos\phi
    \label{eqn:CC-CBF-hor}
\end{equation}

where, $\phi$ is the half angle of the cone, the expression of $\cos\phi$ is given by $\frac{\sqrt{\|\prel\|^2 - r^2}}{\|\prel\|}$ (see Fig. \ref{Fig:Construction of Collision Cone}). 

The constraint simply ensures that the angle between $\prel, \vrel$ is less than $180^\circ - \phi$.  

Having introduced Collision Cone CBF candidates in \ref{subsection: C3BF}, the next step is to formally verify that they are, indeed, valid CBFs. 
We have the following result.

\begin{theorem}\label{thm:CC-CBF-hor}{\it
Given the above model, the proposed CBF candidate \eqref{eqn:CC-CBF-hor} with $\prel,\vrel$ defined by \eqref{eq:pos_hor}, \eqref{eq:vel_hor} is a valid CBF defined for the set $\mathcal{D}$.}
\end{theorem}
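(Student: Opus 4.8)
The plan is to follow the route of Theorem~\ref{thm:CC-CBF} and \cite[Theorem~1]{C3BF_icc}; the only new feature is the double-integrator structure of \eqref{eqn:Acceleration controlled model}, which actually shortens the computation. Write $u=(a,a_z)^\top$, so $f(\state)=(\dot x_p,\dot z_p,0,0)^\top$ and $g(\state)$ carries the $2\times2$ identity in its lower block, and take $\mathcal{D}=\{\state:\norm{\prel}>r,\ \vrel\neq 0\}$, on which $\cos\phi=\sqrt{\norm{\prel}^2-r^2}/\norm{\prel}$ is real and $h_{hor}$ is $C^1$. Differentiating \eqref{eq:pos_hor}--\eqref{eq:vel_hor} along the dynamics gives $\preldot=\vrel$ and---assuming, as in \cite{C3BF_icc}, that the obstacle moves at constant velocity---$\vreldot=-u$. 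The goal is then to exhibit, for every $\state\in\mathcal{D}$, a control $u$ with $\mathcal{L}_f h_{hor}+\mathcal{L}_g h_{hor}\,u+\kappa(h_{hor})\ge 0$.

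First I would differentiate $h_{hor}=\langle\prel,\vrel\rangle+\norm{\vrel}\sqrt{\norm{\prel}^2-r^2}$ using the product and chain rules together with $\preldot=\vrel$ and $\vreldot=-u$, obtaining
\[
\dot h_{hor}=\underbrace{\norm{\vrel}^2+\frac{\norm{\vrel}\,\langle\prel,\vrel\rangle}{\sqrt{\norm{\prel}^2-r^2}}}_{\mathcal{L}_f h_{hor}}-\Big\langle\underbrace{\prel+\tfrac{\sqrt{\norm{\prel}^2-r^2}}{\norm{\vrel}}\,\vrel}_{=\,-\mathcal{L}_g h_{hor}^\top},\ u\Big\rangle,
\]
with both Lie derivatives well defined on $\mathcal{D}$.

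The key step---and the only place the collision-cone geometry enters---is to show $\mathcal{L}_g h_{hor}\neq 0$ everywhere on $\mathcal{D}$. Indeed, if $\mathcal{L}_g h_{hor}=0$ then $\prel=-\tfrac{\sqrt{\norm{\prel}^2-r^2}}{\norm{\vrel}}\vrel$, and taking norms forces $\norm{\prel}=\sqrt{\norm{\prel}^2-r^2}$, i.e.\ $r=0$, contradicting $r>0$. Hence $u\mapsto\mathcal{L}_g h_{hor}\,u$ is onto $\mathbb{R}$, so with $\mathbb{U}=\mathbb{R}^2$ we get $\sup_{u}\big[\mathcal{L}_f h_{hor}+\mathcal{L}_g h_{hor}\,u+\kappa(h_{hor})\big]=+\infty\ge 0$ for all $\state\in\mathcal{D}$. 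The same inequality $\mathcal{L}_g h_{hor}\neq0$ gives $\frac{\partial h_{hor}}{\partial\state}\neq0$ on $\partial\mathcal{C}$, since $\mathcal{L}_g h_{hor}$ is a sub-block of $\frac{\partial h_{hor}}{\partial\state}$. Thus $h_{hor}$ is a valid CBF on $\mathcal{D}$.

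I expect the differentiation to be mechanical; two points deserve care. First, the obstacle model: if the obstacle has only bounded acceleration rather than exactly constant velocity, $\mathcal{L}_f h_{hor}$ acquires an extra term linear in the obstacle acceleration $(\ddot c_x,\ddot c_z)$, but $\mathcal{L}_g h_{hor}\neq0$ is unaffected, so the conclusion still holds. Second---the genuinely delicate point---the singular locus $\{\vrel=0\}$, where $h_{hor}$ is only continuous (because of the $\norm{\vrel}$ factor) and equals $0$, must be excised from $\mathcal{D}$ or handled by a limiting/subgradient argument; there one should also check that $\text{Int}(\mathcal{C})\neq\emptyset$, which holds since any $\state$ with $\vrel$ pointing away from the obstacle has $h_{hor}>0$.
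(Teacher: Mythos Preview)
Your proposal is correct and follows essentially the same approach as the paper: differentiate $h_{hor}$, identify $\mathcal{L}_g h_{hor}=-\big(\prel+\tfrac{\sqrt{\norm{\prel}^2-r^2}}{\norm{\vrel}}\vrel\big)^\top$, and show it cannot vanish via the norm contradiction $\norm{\prel}=\sqrt{\norm{\prel}^2-r^2}\Rightarrow r=0$. Your version is in fact slightly more streamlined than the paper's (which splits the nonvanishing argument into two cases) and more careful about the domain $\mathcal{D}$ and the singular locus $\vrel=0$.
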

\begin{proof}
Taking the derivative of \eqref{eqn:CC-CBF-hor} yields
\begin{align}
\dot h_{hor} = &  < \preldot, \vrel > + < \prel, \vreldot >  \nonumber \\
 & + < \vrel, \vreldot > \frac{\sqrt{\|\prel\|^2 - r^2}}{\|\vrel\|} \nonumber \\
 & + < \prel, \preldot > \frac{\|\vrel\| }{\sqrt{\|\prel\|^2 - r^2}}.
 \label{eqn:h_derivative}
\end{align}
Further $\preldot  = \vrel$ and
\begin{align*}
    \vreldot = \begin{bmatrix}
        - a  \\
        -a_{z}
    \end{bmatrix}. \nonumber
\end{align*}
Given $\vreldot$ and $\dot h$, we have the following expression for $\mathcal{L}_g h$:
\begin{align}
    \mathcal{L}_g h = \begin{bmatrix}
        < \prel + \vrel \frac{\sqrt{\|\prel\|^2 - r^2}}{\|\vrel\|}, \begin{bmatrix}
            - 1 \\
            0
        \end{bmatrix}>  \\
                < \prel + \vrel \frac{\sqrt{\|\vrel\|^2 - r^2}}{\|\vrel\|}, \begin{bmatrix}
            0 \\
            - 1
        \end{bmatrix}> 
    \end{bmatrix}^T,
\end{align}
It can be verified that for $\mathcal{L}_gh$ to be zero, we can have the following scenarios:
\begin{itemize}
    \item $\prel + \vrel \frac{\sqrt{\|\prel\|^2 - r^2}}{\|\vrel\|}=0$, which is not possible. Firstly, $\prel=0$ indicates that the vehicle is already inside the obstacle. Secondly, if the above equation were to be true for a non-zero $\prel$, then $\vrel/\|\vrel\| = - \prel/\sqrt{\|\prel\|^2 - r^2}$. This is also not possible as the magnitude of LHS is $1$, while that of RHS is $>1$.
    \item $\prel + \vrel \frac{\sqrt{\|\vrel\|^2 - r^2}}{\|\vrel\|}$ is perpendicular to both $\begin{bmatrix}
            - 1 \\
            0
        \end{bmatrix}$ and  $\begin{bmatrix}
            0 \\
            - 1
        \end{bmatrix}$, which is also not possible.
\end{itemize}
This implies that $\mathcal{L}_gh$ is always a non-zero matrix, implying that $h$ is a valid CBF.
\end{proof}

In the next section, we will look into the simulation experiments on legged robots.

\section{Simulation Results}
\label{section: Simulation Results}
\par We have validated the C3BF-QP based controller on legged robots for both types of obstacles. The simulations were conducted on Pybullet \cite{coumans2019}, a python-based physics simulation engine. PD Controller is used on top of the baseline controllers (Convex MPC and Zero Moment Point(ZMP) Methods mentioned in the previous section) to track the desired path, and the safety controller deployed is given by Section \ref{subsection: safe_controller}. We chose constant target velocities for verifying the C3BF-QP. For the class $\mathcal{K}$ function in the CBF inequality, we chose $\kappa(h) = \gamma h$, where $\gamma=1$.

\begin{figure}
       \centering
        \begin{subfigure}[b]{0.27\textwidth}
        \includegraphics[width=\textwidth]{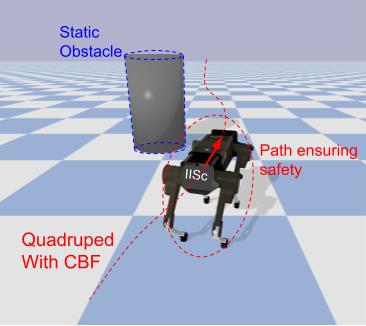}
        \caption{}
        \end{subfigure}
        \begin{subfigure}[b]{0.183\textwidth}
        \includegraphics[width=\textwidth]{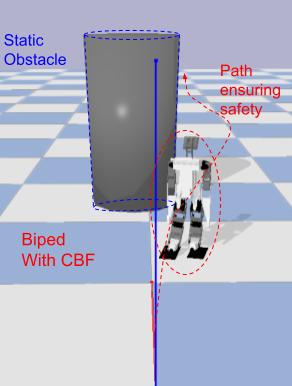}
        \caption{}
        \end{subfigure}
        \caption{Interaction with static obstacles in (a) Quadruped and in (b) Biped Robots}
        \label{fig:static-obs}
    \end{figure}

\begin{figure}
       \centering
        \begin{subfigure}[b]{0.24\textwidth}
        \includegraphics[width=\textwidth]{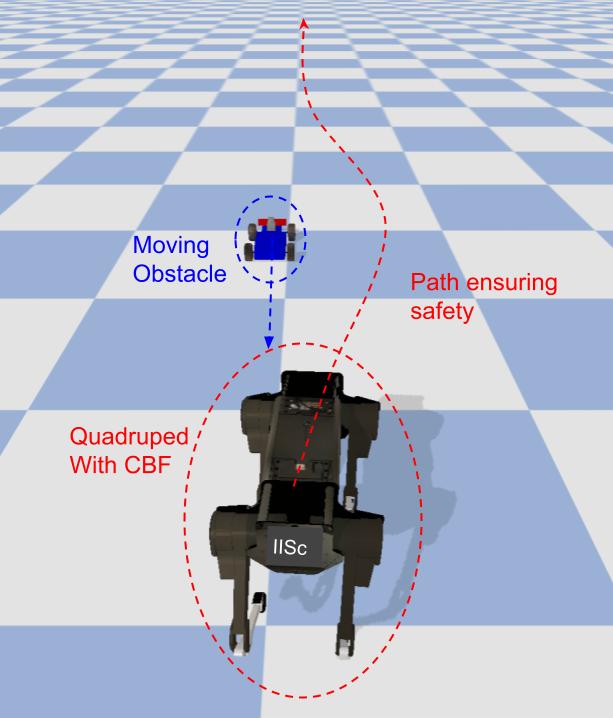}
        \caption{}
        \end{subfigure}
        \begin{subfigure}[b]{0.213\textwidth}
        \includegraphics[width=\textwidth]{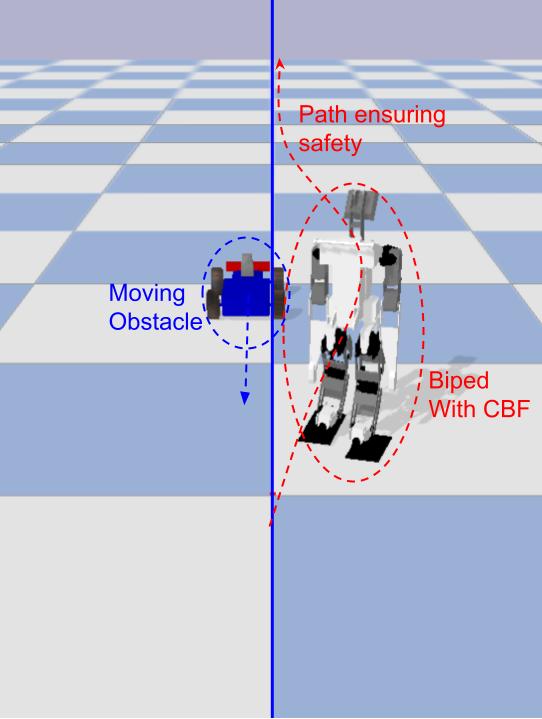}
        \caption{}
        \end{subfigure}
        \caption{Interaction with moving obstacles in (a) Quadruped and in (b) Biped Robots}
        \label{fig:mov-obs}
    \end{figure}

Having presented our proposed control method design, we now test our framework under three different scenarios to illustrate the performance of the controller. These scenarios include the interaction of Quadruped and Biped with: (1) a static obstacle Fig. \ref{fig:static-obs}, (2) a moving obstacle Fig. \ref{fig:mov-obs} and (3) Horizontal obstacle Fig. \ref{fig:hor-obs}.

In case of vertical obstacles (obstacles in x-y plane), the robots avoided the obstacles by moving/ overtaking from the side. In case of horizontal obstacles (obstacles in x-z plane), the robot ducks under the obstacle and regains its height back, after the obstacle is avoided.
The resulting simulations for all the cases can be viewed on the webpage\footnote{\label{note: Pybullet Simulation Video} \url{https://tayalmanan28.github.io/Safe-Legged-C3BF/}}.

\begin{figure}
       \centering
        \begin{subfigure}[b]{0.46\textwidth}
        \includegraphics[width=\textwidth]{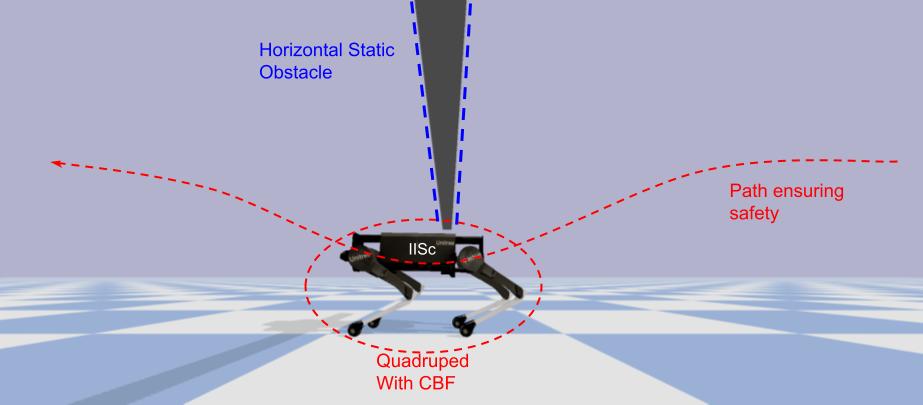}
        \caption{}
        \end{subfigure}
        \caption{Interaction with horizontal obstacle in Quadruped Robot}
        \label{fig:hor-obs}
    \end{figure}

\section{Conclusions}
\label{section: Conclusions}
In conclusion, our research has successfully expanded the application of collision cone Control Barrier Functions (CBFs) to enhance the safety of legged robots when navigating through environments featuring static and dynamic obstacles (as showcased in the accompanying videos on our webpage). Through our work, we have underscored the seamless integration of the proposed C3BF-QP controller with baseline controllers, employed both in quadruped and biped robots. This ease of integration stems from the model-free nature of our formulation, which primarily considers acceleration, highlighting its versatility and potential for real-world deployment.
In our future work, we plan to implement the controller on legged robots in the real-world situations, interacting with a variety of obstacles. We also aim to investigate the potential of safe legged robots navigation in cluttered environments and confined spaces\cite{tayal2023polygonal}. 
Additionally, synthesizing formally verified neural CBFs \cite{tayal2024learning, tayal2024semi} for Legged robots can also be explored.

\newpage
\label{section: References}
\bibliographystyle{IEEEtran}
\bibliography{references.bib}

\begin{thebibliography}{10}
\providecommand{\url}[1]{#1}
\csname url@samestyle\endcsname
\providecommand{\newblock}{\relax}
\providecommand{\bibinfo}[2]{#2}
\providecommand{\BIBentrySTDinterwordspacing}{\spaceskip=0pt\relax}
\providecommand{\BIBentryALTinterwordstretchfactor}{4}
\providecommand{\BIBentryALTinterwordspacing}{\spaceskip=\fontdimen2\font plus
\BIBentryALTinterwordstretchfactor\fontdimen3\font minus
  \fontdimen4\font\relax}
\providecommand{\BIBforeignlanguage}[2]{{%
\expandafter\ifx\csname l@#1\endcsname\relax
\typeout{** WARNING: IEEEtran.bst: No hyphenation pattern has been}%
\typeout{** loaded for the language `#1'. Using the pattern for}%
\typeout{** the default language instead.}%
\else
\language=\csname l@#1\endcsname
\fi
#2}}
\providecommand{\BIBdecl}{\relax}
\BIBdecl

\bibitem{C3BF_icc}
P.~Thontepu, B.~G. Goswami, M.~Tayal, N.~Singh, S.~S. P~I, S.~S. M~G,
  S.~Sundaram, V.~Katewa, and S.~Kolathaya, ``Collision cone control barrier
  functions for kinematic obstacle avoidance in ugvs,'' in \emph{2023 Ninth
  Indian Control Conference (ICC)}, 2023, pp. 293--298.

\bibitem{tayal2023control}
M.~Tayal, R.~Singh, J.~Keshavan, and S.~Kolathaya, ``Control barrier functions
  in dynamic uavs for kinematic obstacle avoidance: A collision cone
  approach,'' in \emph{2024 American Control Conference (ACC)}, 2024, pp.
  3722--3727.

\bibitem{tayal2024collision}
M.~Tayal, B.~G. Goswami, K.~Rajgopal, R.~Singh, T.~Rao, J.~Keshavan, P.~Jagtap,
  and S.~Kolathaya, ``A collision cone approach for control barrier
  functions,'' \emph{arXiv preprint arXiv:2403.07043}, 2024.

\bibitem{7040372}
A.~D. Ames, J.~W. Grizzle, and P.~Tabuada, ``Control barrier function based
  quadratic programs with application to adaptive cruise control,'' in
  \emph{53rd IEEE Conference on Decision and Control}, 2014, pp. 6271--6278.

\bibitem{Ames_2017}
\BIBentryALTinterwordspacing
A.~D. Ames, X.~Xu, J.~W. Grizzle, and P.~Tabuada, ``Control barrier function
  based quadratic programs for safety critical systems,'' \emph{{IEEE}
  Transactions on Automatic Control}, vol.~62, no.~8, pp. 3861--3876, aug 2017.
  [Online]. Available: \url{https://doi.org/10.1109%2Ftac.2016.2638961}
\BIBentrySTDinterwordspacing

\bibitem{Fiorini1993}
P.~Fiorini and Z.~Shiller, ``Motion planning in dynamic environments using the
  relative velocity paradigm,'' in \emph{[1993] Proceedings IEEE International
  Conference on Robotics and Automation}, 1993, pp. 560--565 vol.1.

\bibitem{doi:10.1177/027836499801700706}
\BIBentryALTinterwordspacing
------, ``Motion planning in dynamic environments using velocity obstacles,''
  \emph{The International Journal of Robotics Research}, vol.~17, no.~7, pp.
  760--772, 1998. [Online]. Available:
  \url{https://doi.org/10.1177/027836499801700706}
\BIBentrySTDinterwordspacing

\bibitem{709600}
A.~Chakravarthy and D.~Ghose, ``Obstacle avoidance in a dynamic environment: a
  collision cone approach,'' \emph{IEEE Transactions on Systems, Man, and
  Cybernetics - Part A: Systems and Humans}, vol.~28, no.~5, pp. 562--574,
  1998.

\bibitem{8796030}
A.~D. Ames, S.~Coogan, M.~Egerstedt, G.~Notomista, K.~Sreenath, and P.~Tabuada,
  ``Control barrier functions: Theory and applications,'' in \emph{2019 18th
  European Control Conference (ECC)}, 2019, pp. 3420--3431.

\bibitem{8594448}
J.~Di~Carlo, P.~M. Wensing, B.~Katz, G.~Bledt, and S.~Kim, ``Dynamic locomotion
  in the mit cheetah 3 through convex model-predictive control,'' in \emph{2018
  IEEE/RSJ International Conference on Intelligent Robots and Systems (IROS)},
  2018, pp. 1--9.

\bibitem{4209488}
J.~Park and Y.~Youm, ``General zmp preview control for bipedal walking,'' in
  \emph{Proceedings 2007 IEEE International Conference on Robotics and
  Automation}, 2007, pp. 2682--2687.

\bibitem{1241826}
S.~Kajita, F.~Kanehiro, K.~Kaneko, K.~Fujiwara, K.~Harada, K.~Yokoi, and
  H.~Hirukawa, ``Biped walking pattern generation by using preview control of
  zero-moment point,'' in \emph{2003 IEEE International Conference on Robotics
  and Automation (Cat. No.03CH37422)}, vol.~2, 2003, pp. 1620--1626 vol.2.

\bibitem{maximo2015omnidirectional}
M.~Maximo, ``Omnidirectional zmp-based walking for a humanoid robot,''
  \emph{Master's thesis, Aeronautics Institute of Technology}, 2015.

\bibitem{tayal2022realising}
M.~Tayal and S.~Kolathaya, ``Realising the role of arms on improving the
  stability bipedal robots,'' \emph{preprint}, 2022.

\bibitem{10569443}
G.~Mothish, K.~Rajgopal, R.~Kola, M.~Tayal, and S.~Kolathaya, ``Stoch biro:
  Design and control of a low-cost bipedal robot,'' in \emph{2024 10th
  International Conference on Control, Automation and Robotics (ICCAR)}, 2024,
  pp. 135--140.

\bibitem{mothish2024birodiff}
G.~Mothish, M.~Tayal, and S.~Kolathaya, ``Birodiff: Diffusion policies for
  bipedal robot locomotion on unseen terrains,'' \emph{arXiv preprint
  arXiv:2407.05424}, 2024.

\bibitem{coumans2019}
E.~Coumans and Y.~Bai, ``Pybullet, a python module for physics simulation for
  games, robotics and machine learning,'' \url{http://pybullet.org},
  2016--2019.

\bibitem{tayal2023polygonal}
M.~Tayal and S.~Kolathaya, ``Polygonal cone control barrier functions
  (polyc2bf) for safe navigation in cluttered environments,'' in \emph{2024
  European Control Conference (ECC)}, 2024, pp. 2212--2217.

\bibitem{tayal2024learning}
M.~Tayal, H.~Zhang, P.~Jagtap, A.~Clark, and S.~Kolathaya, ``Learning a
  formally verified control barrier function in stochastic environment,''
  \emph{arXiv preprint arXiv:2403.19332}, 2024.

\bibitem{tayal2024semi}
M.~Tayal, A.~Singh, P.~Jagtap, and S.~Kolathaya, ``Semi-supervised safe
  visuomotor policy synthesis using barrier certificates,'' \emph{arXiv
  preprint arXiv:2409.12616}, 2024.

\end{thebibliography}

\end{document}